\newenvironment{proof}{{\bf Proof}:}{\vskip 5mm }
\newtheorem{proposition}{Proposition}[subsection]
\newtheorem{definition}[proposition]{Definition}
\newtheorem{theorem}[proposition]{Theorem}
\newtheorem{example}[proposition]{Example}
\newtheorem{problem}[proposition]{Problem}
\newtheorem{solution}[proposition]{Solution}
\newcommand{\llabel}[1]{\label{#1}}
\newcommand{\sr}{\rightarrow}
\newcommand{\nn}{{\bf N\rm}}
\newcommand{\nat}{\nn}
\newcommand{\ELEMENTOF}{\Pisymbol{psy}{206}}
\begin{document}
\parskip = 2mm
\begin{center}
{\bf\Large A theory of contemplation}


\vspace{3mm}

{\large\bf Jonathan Darren Nix}
\vspace {3mm}

{\large\bf December 2015 to November 2019}
\end{center}

\begin{abstract} In this paper you can explore the application of some notable Boolean-derived methods, namely the Disjunctive Normal Form representation of logic table expansions, and extend them to a real-valued logic model which is able to utilize quantities on the range [0,1], [-1,1], [a,b], (x,y), (x,y,z), (r, g, b) and etc. so as to produce a logical programming of arbitrary range, precision, and dimensionality, thereby enabling contemplation at a logical level in notions of arbitrary data, colors, and spatial constructs, with an example of the production of a game character's logic in mathematical form.
\end{abstract}

\tableofcontents

\subsection{Introduction}
Let us begin please by considering the set $S$ of all functions of the form $f:X\sr Y$ and $g:X,Y\sr Z$ where each $X$, $Y$, and $Z$ are single {\em bits} from the set \{0,1\}. Then we have the set of functions as depicted in Tables \ref{table:TUES1} and \ref{table:TUES2} which we may call the George Boole Logical Operators, where all such operators of the form $\circleddash x$, or $x$ $\circledast$ $y$ may be interpreted as functions, e.g. $\circleddash(x)$, or $\circledast(x,y)$, and vice-versa. When given the variable $X$, or the tuple ($X$,$Y$), which we may call {\em Boolean} variables, we may list each of their possible values, and then in the columns define each possible function that may exist.

\begin{table}[ht]
\centering
\caption{Boolean Logical Operators : $y = f_{n}(x)$}
\begin{tabular}{ c|c|c|c|c|c| }
  \cline{3-6}
  \multicolumn{2}{r|}{$X$} & $f_{1}$ & $f_{2}$ & $f_{3}$ & $f_{4}$\\
  \cline{2-6}
  \multirow{2}{*}{$S_{f}$} & 0 & 0  & 0  & 1  & 1 \\
  \cline{2-6}
  & 1 & 0  & 1  & 0  & 1 \\
  \cline{2-6}
\end{tabular}
\label{table:TUES1}
\end{table}

\begin{table}[ht]
\centering
\caption{All Boolean Logical Operators : $z = g_{n}(x, y)$}
\begin{tabular}{ c|c|c| c|c|c|c|c|c|c|c|c|c|c|c|c|c|c|c| }
  \cline{4-19}
  \multicolumn{3}{r|}{$X$,$Y$} & $g_{1}$ & $g_{2}$ & $g_{3}$ & $g_{4}$ & $g_{5}$ & $g_{6}$ & $g_{7}$ & $g_{8}$ & $g_{9}$ & $g_{10}$ & $g_{11}$ & $g_{12}$ & $g_{13}$ & $g_{14}$ & $g_{15}$ & $g_{16}$\\
  \cline{2-19}
  \multirow{4}{*}{$S_{g}$}
  & 0 & 0 & 0 & 0 & 0 & 0 & 0 & 0 & 0 & 0 & 1 & 1 & 1 & 1 & 1 & 1 & 1 & 1 \\
  \cline{2-19}
  & 0 & 1 & 0 & 0 & 0 & 0 & 1 & 1 & 1 & 1 & 0 & 0 & 0 & 0 & 1 & 1 & 1 & 1 \\
  \cline{2-19}
  & 1 & 0 & 0 & 0 & 1 & 1 & 0 & 0 & 1 & 1 & 0 & 0 & 1 & 1 & 0 & 0 & 1 & 1 \\
  \cline{2-19}
  & 1 & 1 & 0 & 1 & 0 & 1 & 0 & 1 & 0 & 1 & 0 & 1 & 0 & 1 & 0 & 1 & 0 & 1 \\
  \cline{2-19}
\end{tabular}
\label{table:TUES2}
\end{table}

From these tables we select the functions $f_{3}$, $g_{2}$, $g_{8}$, and $g_{10}$ which satisfy the definitions of the Boolean NOT, AND, OR, and XNOR operators from \cite{CitationDefinitionOfSelectedOperators}, and so they are depicted in Table \ref{table:TUES3}.

\begin{table}[ht]
\centering
\caption{Selected Boolean Operators}
\begin{tabular}{ c|c|c|c }
  \cline{3-3}
  \multicolumn{1}{c}{$X$} & \multicolumn{1}{c|}{$Y$} & $AND$ \\
  \cline{1-3}
  \multicolumn{1}{|c|}{0} & 0 & 0 \\
  \cline{1-3}
  \multicolumn{1}{|c|}{0} & 1 & 0 \\
  \cline{1-3}
  \multicolumn{1}{|c|}{1} & 0 & 0 \\
  \cline{1-3}
  \multicolumn{1}{|c|}{1} & 1 & 1 \\
  \cline{1-3}
\end{tabular}
\begin{tabular}{ c|c|c|c }
  \cline{3-3}
  \multicolumn{1}{c}{$X$} & \multicolumn{1}{c|}{$Y$} & $OR$ \\
  \cline{1-3}
  \multicolumn{1}{|c|}{0} & 0 & 0 \\
  \cline{1-3}
  \multicolumn{1}{|c|}{0} & 1 & 1 \\
  \cline{1-3}
  \multicolumn{1}{|c|}{1} & 0 & 1 \\
  \cline{1-3}
  \multicolumn{1}{|c|}{1} & 1 & 1 \\
  \cline{1-3}
\end{tabular}
\begin{tabular}{ c|c|c|c }
  \cline{3-3}
  \multicolumn{1}{c}{$X$} & \multicolumn{1}{c|}{$Y$} & $XNOR$ \\
  \cline{1-3}
  \multicolumn{1}{|c|}{0} & 0 & 1 \\
  \cline{1-3}
  \multicolumn{1}{|c|}{0} & 1 & 0 \\
  \cline{1-3}
  \multicolumn{1}{|c|}{1} & 0 & 0 \\
  \cline{1-3}
  \multicolumn{1}{|c|}{1} & 1 & 1 \\
  \cline{1-3}
\end{tabular}
\begin{tabular}{ c|c|c }
  \cline{2-2}
  \multicolumn{1}{r|}{$X$} & $NOT$ \\
  \cline{1-2}
  \multicolumn{1}{|c|}{0} & 1 \\
  \cline{1-2}
  \multicolumn{1}{|c|}{1} & 0 \\
  \cline{1-2}
\end{tabular}
\label{table:TUES3}
\end{table}

From these tables we can also observe that each function in $S_{f}$ can be written in terms of a function in $S_{g}$ as shown by Table \ref{table:TUES4}, allowing that only those functions in $S_{g}$ be further utilized by this paper\footnote{Observe that the value of the quantity $i$ in Table \ref{table:TUES4} does not affect the output of the function.}.

\begin{table}[ht]
\centering
\caption{Selected relations showing connection of $S_{f}$ to $S_{g}$}
\begin{tabular}{ c c }
  $f_{1}(x) = g_{1}(x,i)$ & $f_{2}(x) = g_{4}(x,i)$ \\
  $f_{3}(x) = g_{13}(x,i)$ & $f_{4}(x) = g_{16}(x,i)$ \\
\end{tabular}
\label{table:TUES4}
\end{table}

Then the set $S_{g}$ may be used instead of the set $S_{f}$ and we select Theorem \ref{theorem:WED1} to utilize $XNOR(x,0)$ instead of $NOT(x)$, to thereby use AND, OR, and XNOR further in the paper. We show that XNOR is binary equivalent to NOT in the proof of Theorem \ref{theorem:WED1}.

\begin{theorem}
\llabel{theorem:WED1}
NOT(x) = XNOR(x,0)
\end{theorem}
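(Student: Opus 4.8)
The plan is to prove this by exhaustive case analysis, since the domain of $x$ is the finite set $\{0,1\}$ and so there are only two cases to check. This reduces the claimed functional identity $NOT(x) = XNOR(x,0)$ to a pair of scalar equalities, each of which can be read directly off Table \ref{table:TUES3}.

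First I would fix $x = 0$. From the $NOT$ column of Table \ref{table:TUES3} we have $NOT(0) = 1$. From the $XNOR$ column, the row with $X = 0$ and $Y = 0$ gives $XNOR(0,0) = 1$. Hence the two sides agree when $x = 0$. Next I would fix $x = 1$: Table \ref{table:TUES3} gives $NOT(1) = 0$, and the row of the $XNOR$ table with $X = 1$, $Y = 0$ gives $XNOR(1,0) = 0$, so the two sides agree when $x = 1$ as well.

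Having checked both admissible values of the Boolean variable $x$, I would conclude that $NOT(x)$ and $XNOR(x,0)$ define the same function $\{0,1\} \sr \{0,1\}$, which is precisely the assertion of the theorem; this also justifies the earlier remark that $XNOR$ restricted by setting its second argument to $0$ is binary-equivalent to $NOT$, so that $S_g$ may indeed be used in place of $S_f$. The only thing to be careful about — it is not really an obstacle — is to make sure the correct rows of Table \ref{table:TUES3} are matched to each case (i.e. that the second argument of $XNOR$ is pinned to $0$, not $1$), since picking the wrong row would spuriously suggest disagreement. No algebraic manipulation is required beyond this tabular lookup.
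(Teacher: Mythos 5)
Your proof is correct and follows the same route as the paper: the paper's own proof is precisely the exhaustive two-case check $NOT(0) = XNOR(0,0) = 1$ and $NOT(1) = XNOR(1,0) = 0$, identifying $f_{3}(x)$ with $g_{10}(x,0)$. Your additional remarks about pinning the second argument to $0$ are a harmless elaboration of the same tabular lookup.
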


\begin{proof}
Given x, then $f_{3}(x) = g_{10}(x,0)$ with:
\begin{enumerate}
  \item NOT(0) = XNOR(0,0) = 1
  \item NOT(1) = XNOR(1,0) = 0
\end{enumerate}
\label{proof:WED1}
\end{proof}

Then we can show that the extra parameter on XNOR embodies a previously constant value in the definition of logic table from \cite{CitationTruthTableDefinition}. We accomplish that through an analysis of world values, logic values, logic questions, logic tables and their expansion to a Disjunctive Normal Form (DNF) equation which computes the result designed by the logic table using the methods from \cite{CitationProcessToGenerateDNF} and \cite{CitationDefinitionOfDNF}.

During that analysis a series of examples are shown using conventional methods on discrete Boolean logical quantities (e.g. logic values from the set \{ 0, 1 \}), and an extension to a non-discrete continuous logic space on the range [0, 1] is proposed, its methods utilizing logic tables are shown, and several examples and code samples are provided.

It is then shown that by extension, a pair of [0, 1] contemplations can achieve a full [-1, 1] contemplation and hence, achieve contemplation on the arbitrary range [a, b] and by further extension, through applying a further [0, 1] set of half-space vectors, yield contemplations at a logical level in terms of multidimensional arbitrary data quantities of (x, y), (x, y, z), (r, g, b), and etc.

\subsection{Definitions of terms}

\begin{definition} A ``World Value'', labeled $W$, is an external stimulae, potentially quantized (e.g. in numeric form), representing some specific aspect of qualia as in \cite{CitationOfQualia} (e.g. hot, cold, far, near, etc.) in regard to a perception of a system.
\label{definition:WORLDVALUE}
\end{definition}

\begin{definition} A ``Logic Value'', labeled $Z$, is one of:
\begin{enumerate}
  \item A value from the set \{0, 1\}, e.g. a ``Boolean'' value.
  \item A value from an arbitrary set $S = \{ s_{0}, s_{1}, ... s_{n}\}$, e.g. a discrete or ``crisp'' value.
  \item A value from the range [0,1], e.g. a continuous or ``fuzzy'' value.
  \item A value from the arbitrary range [a, b] or dimension (x, y, ...), e.g. a ``composite'' value, which is the subject of a further section in the paper.
  \item A special value UNKNOWN, e.g. provision for an ``unknown'' value.
\end{enumerate}
\label{definition:LOGICVALUE}
\end{definition}

\begin{definition} A ``fuzzification'' after \cite{CitationOfFuzzyControlSystems} is a mapping $W \sr Z$, e.g. a ``fuzzification function'', and may be a mathematical operation that performs the mapping from a world value to a logic value when $W$ is represented in numeric form.
\label{definition:FUZZIFICATION}
\end{definition}

\begin{definition} A ``Logic Question'' is one of:
\begin{enumerate}
  \item A question answerable by a value from the set \{0, 1\}, e.g. a ``Boolean'' question.
  \item A question answerable from an arbitrary set S of states, e.g. a discrete or ``crisp'' question.
  \item A question answerable by a value from the range [0,1], e.g. a continuous or ``fuzzy'' question.
  \item A question answerable by a value from the arbitrary range [a, b] or dimension (x, y, ...), e.g. a ``composite'' question, which is the subject of a further section of the paper.
\end{enumerate}
\label{definition:TUES.2.16.2}
\end{definition}

\begin{definition}A logic question may be classified according to its difficulty, such that:
\begin{enumerate}
  \item An ``atomic question'' is a logic question that is so trivially answered that its input may be readily transformed directly into the question's answer, i.e. through {\em fuzzification}.
  \item A ``moderate question'' is a logic question that is deemed to be answered through the utilization of a single logic table.
  \item A ``hard question'' is a logic question that takes upon itself the composition of a hierarchy of two or more logic tables, and is the subject of a further section of the paper.
\end{enumerate}
\label{definition:TUES.10.22.2019.1}
\end{definition}

\begin{definition} A ``contemplation'' in this paper is a {\em Fuzzy Control System} process after \cite{CitationOfFuzzyControlSystems}, as follows:
\begin{enumerate}
  \item A fuzzification of a set of world values W into a set of logic values Z (e.g. a perception).
  \item A contemplation on a set of logic values Z, e.g. an answering of a series of logic questions, in this paper using Boolean-derived methods in continuous mathematical form.
  \item A defuzzification of some resulting logic value from Z into a world value W (e.g. a motive action).
\end{enumerate}
\end{definition}

In Problem \ref{problem:TUE.2.16.1} further in the paper we apply the fuzzy control system process by gathering a collection of world values which we are interested in, and organize them into a set $W$ which we may further utilize.

We observe that world values are of any arbitrary units and form, and so we mathematically normalize them by utilizing a set of fuzzification functions to transform the world values into logic values which we may further contemplate upon.

We define logic values in Definition \ref{definition:LOGICVALUE} to describe the various types of numeric values and their interpretations as utilized in this paper.

We answer logic questions with the methods in the paper, and produce logic values. When the logic value is strictly numeric we may further utilize its value mathematically. We may transform the logic value into a world value, or otherwise act upon it, through an action known as defuzzification.

We can observe that the action of forming a logic question is an inductive step, and the act of answering a logic question via one or more conditions upon which the question depends, is an example of a deductive step.

Examples are shown further in this paper of logic tables answering logic questions.

\subsection{Logic Tables}

We define logic tables in this paper by Definition \ref{definition:SUN1}. We may think of a logic table as accepting some specific input $I = \{ i_{0}, i_{1}, i_{2}, ..., i_{i} \}$ where each element of the set $I$ is assigned some logic value. Then the items in the set $I$ are implicitly interpreted as having a subjective meaning (e.g. health, shields, distance, etc.) and so the set $I$ represents a specific situation within which the logic table is able to produce some specific output $O$ which is itself a single logic value. Like the input values, the output value also has a subjective meaning which can be interpreted as either a computation or a contemplation of logic, so the logic table is said to output the answer to a logic question.

\begin{definition} A ``Logic Table'' consists of:

\begin{enumerate}
  \item The set $I$ of inputs $\{ i_{0}, i_{1}, i_{2}, ..., i_{i} \}$ where each element of $I$ is a logic value. When values are assigned to each member of $I$, the set represents a specific situation under which the logic table will compute the answer to a configured question.
  \item The set $O$ of outputs $\{ o_{0}, o_{1}, o_{2}, ..., o_{j} \}$ where each element of $O$ is a logic value. The values in the set $O$ each describe a specific response that the table defines for some recognized input situation.
  \item The set $M$ of matrix values $\{ m_{0,0}, m_{0,1}, m_{0,2}, ..., m_{i,j} \}$ where each element of $M$ is a logic value. The values in the set $M$ describe the situations within which the corresponding outputs from $O$ are produced.
\end{enumerate}
\label{definition:SUN1}
\end{definition}

We can then observe that the definitions of Boolean logical operations in Tables \ref{table:TUES1}, \ref{table:TUES2}, and \ref{table:TUES3} are themselves logic tables, as are any mathematical definition table, or any table listing a mapping of inputs to outputs.

We then describe how a logic table can be constructed and an equation produced which computationally maps the configured inputs to the configured outputs, and we apply the process to produce a mathematical programming of a game character's logic.

\subsection{Disjunctive Normal Form (DNF)}

We can make use of the AND, OR, and NOT\footnote{We then use XNOR instead of NOT in order to compose the DNF equation in a more customizable form.} operators by applying an established technique from \cite{CitationProcessToGenerateDNF} whereby an $I \sr O$ mapping is listed in a logic table, and a formula in terms of AND, OR, and NOT is written which computes the outputs depicted in the table when their directly-related inputs are presented to the equation. The equation produced is said to be in Disjunctive Normal Form (DNF) after \cite{CitationDefinitionOfDNF}.

Then an algorithm whereby that process is most traditionally accomplished is depicted in Algorithm \ref{algorithm:WED1}, and an algorithm depicting use of XNOR is depicted in Algorithm \ref{algorithm:WED2}.

\begin{algorithm}
\caption{Produce DNF Equation from Logic Table using NOT}
\begin{algorithmic}[1]
\STATE Given a logic table \{ $I$, $O$, $M$ \}
\STATE Let $equation$ = 0
\FOR{each row $j$ in $M$}
  \IF { $O_{j} \neq 0$ }
    \STATE Let $term$ = 1
    \FOR{each input $i$ \ELEMENTOF $I$ }
        \IF { $M_{i,j}$ = 0 }
          \STATE $term$ = AND($term$, NOT($I_{i}$))
        \ELSE
          \STATE $term$ = AND($term$, $I_{i}$)
        \ENDIF
    \ENDFOR
    \STATE $equation$ = OR($equation$, $term$)
  \ENDIF
\ENDFOR
\RETURN $equation$
\end{algorithmic}
\label{algorithm:WED1}
\end{algorithm}

\begin{algorithm}
\caption{Produce DNF Equation from Logic Table using XNOR}
\begin{algorithmic}[1]
\STATE Given a logic table \{ $I$, $O$, $M$ \}
\STATE Let $equation$ = 0
\FOR{each row $j$ in $M$}
  \IF { $O_{j} \neq 0$ }
    \STATE Let $term$ = 1
    \FOR{each input $i$ \ELEMENTOF $I$ }
      \STATE $term$ = AND($term$, XNOR($I_{i}$, $M_{i,j}$))
    \ENDFOR
    \STATE $equation$ = OR($equation$, $term$)
  \ENDIF
\ENDFOR
\RETURN $equation$
\end{algorithmic}
\label{algorithm:WED2}
\end{algorithm}

An example of the equations produced by Algorithms \ref{algorithm:WED1} and \ref{algorithm:WED2} are shown through Example \ref{example:WED1}.

\begin{example}
\parbox{10cm}{Given an arbitrary logic table, such as the XOR operation:}
\llabel{example:WED1}

\begin{tabular}{ c|c|c| p{12cm} }
  \multicolumn{3}{c}{ $I_{i}$ = \(X,Y\) } & \\
  \cline{3-3}
  \multicolumn{2}{c|}{$M_{i,j}$} & $O_{j}$ & \multirow{3}{*}{
    \parbox{12cm}{Find an equation in Disjunctive Normal Form that computes the output depicted in the table, when an input matching any of those listed in $M$ is presented as $X$, $Y$ on the input.}} \\
  \cline{1-3}
  \multicolumn{1}{|c|}{0} & 0 & 0 & \\
  \cline{1-3}
  \multicolumn{1}{|c|}{0} & 1 & 1 & \\
  \cline{1-3}
  \multicolumn{1}{|c|}{1} & 0 & 1 & \\
  \cline{1-3}
  \multicolumn{1}{|c|}{1} & 1 & 0 & \\
  \cline{1-3}
\end{tabular}
\end{example}

\begin{solution}
The equation produced by Algorithm \ref{algorithm:WED1} for the table depicted in Example \ref{example:WED1} is:
\begin{center}
  (NOT(X) AND Y) OR (X AND NOT(Y))
\end{center}
\label{solution:WED1}
\end{solution}

\begin{solution}
The equation produced by Algorithm \ref{algorithm:WED2} for the table depicted in Example \ref{example:WED1} is:
\begin{center}
  (XNOR(X,0) AND XNOR(Y,1)) OR (XNOR(X,1) AND XNOR(Y,0))
\end{center}
\label{solution:WED2}
\end{solution}

We observe that the additional parameter on the XNOR operation is itself the value from the logic table, and we observe by Theorem \ref{theorem:THU1} that the equation utilizing XNOR is identical to the equation utilizing NOT.

\begin{theorem}
The equation in Solution \ref{solution:WED2} equates to the equation in Solution \ref{solution:WED1}.
\label{theorem:THU1}
\end{theorem}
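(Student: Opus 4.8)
The plan is to reduce the claimed identity of the two multi-term formulas to a term-by-term comparison, and then to a literal-by-literal comparison, invoking Theorem \ref{theorem:WED1} as the base case. First I would observe that both equations have the same OR-structure: one disjunct per row $j$ of $M$ with $O_j \neq 0$, and within each disjunct one conjoined factor per input $i$. Since Solutions \ref{solution:WED1} and \ref{solution:WED2} are produced by Algorithms \ref{algorithm:WED1} and \ref{algorithm:WED2} respectively, and those two algorithms differ only in the inner \texttt{FOR} body, it suffices to show that for each input $i$ and each row $j$ the factor \texttt{XNOR($I_i$, $M_{i,j}$)} computed by Algorithm \ref{algorithm:WED2} equals the factor computed by Algorithm \ref{algorithm:WED1}, namely \texttt{NOT($I_i$)} when $M_{i,j} = 0$ and $I_i$ itself when $M_{i,j} = 1$.

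Next I would dispatch the two cases. When $M_{i,j} = 0$, the factor is \texttt{XNOR($I_i$, 0)}, which equals \texttt{NOT($I_i$)} by Theorem \ref{theorem:WED1} — this is exactly the reason that theorem was proved. When $M_{i,j} = 1$, the factor is \texttt{XNOR($I_i$, 1)}, and I would verify directly from Table \ref{table:TUES3} that $\mathrm{XNOR}(0,1) = 0$ and $\mathrm{XNOR}(1,1) = 1$, i.e. $\mathrm{XNOR}(x,1) = x$ as a Boolean function. Thus in both cases the XNOR factor of Algorithm \ref{algorithm:WED2} is syntactically interchangeable — as a function on $\{0,1\}$ — with the corresponding factor of Algorithm \ref{algorithm:WED1}.

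Having matched the factors, I would assemble the conclusion: for the XOR table of Example \ref{example:WED1}, the rows with $O_j \neq 0$ are $(X,Y) = (0,1)$ and $(1,0)$, so Algorithm \ref{algorithm:WED2} yields \texttt{(XNOR(X,0) AND XNOR(Y,1)) OR (XNOR(X,1) AND XNOR(Y,0))}, and substituting $\mathrm{XNOR}(x,0) = \mathrm{NOT}(x)$ and $\mathrm{XNOR}(x,1) = x$ turns this into \texttt{(NOT(X) AND Y) OR (X AND NOT(Y))}, which is Solution \ref{solution:WED1}. Since AND and OR are applied to equal arguments, the compound expressions agree on every Boolean input, so the two equations are equal as functions.

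I expect the only real subtlety to be bookkeeping rather than mathematics: being careful that ``equate'' is meant in the sense of equality of the induced Boolean functions (equivalently, agreement on all $2^{|I|}$ input assignments), not literal syntactic identity of the strings, and making explicit the one small lemma $\mathrm{XNOR}(x,1) = x$ that parallels Theorem \ref{theorem:WED1}. Everything else is a finite check against Table \ref{table:TUES3} and the structural fact that Algorithms \ref{algorithm:WED1} and \ref{algorithm:WED2} share the same loop skeleton.
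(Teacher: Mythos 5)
Your proposal is correct and follows essentially the same route as the paper: both arguments convert Solution \ref{solution:WED2} into Solution \ref{solution:WED1} by substituting the two relations $\mathrm{XNOR}(x,0) = \mathrm{NOT}(x)$ and $\mathrm{XNOR}(x,1) = x$ factor by factor. Your version is somewhat more careful in that it explicitly verifies the second relation from Table \ref{table:TUES3} (the paper simply asserts it alongside Theorem \ref{theorem:WED1}) and in pinning down that ``equates'' means agreement as Boolean functions, but the underlying idea is identical.
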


\begin{proof}
  Given Equation \ref{solution:WED1}: Apply the relations from Theorem \ref{theorem:WED1} and observe that while preserving the values of the equations there exists a process to convert one into the other.
  \begin{enumerate}
    \item XNOR(Q,0) = NOT(Q)
    \item XNOR(Q,1) = Q
  \end{enumerate}
  \label{proof:THU1}
\end{proof}

We then consider the consequences of Theorem \ref{theorem:THU1} in regard to its utility in Logic Tables. In particular we find, and utilize in Definition \ref{definition:XNOR}, the additional parameter on the XNOR operator and find that it may utilize additional form of logic values, but first please observe Examples \ref{example:SUN1} and \ref{example:SUN2}.

\begin{example}
Produce a logic table $T_{1}$ which ``recognizes'' the bit sequence \{ 1, 0, 1 \}.
\llabel{example:SUN1}
\end{example}

\begin{solution}
The logic table produced for Example \ref{example:SUN1} is:
\begin{center}
  \begin{tabular}{ c c|c|c|c|c c }
    \cline{5-5}
    $I$: & \multicolumn{1}{c}{$X$} & \multicolumn{1}{c}{$Y$} & \multicolumn{1}{c|}{$Z$} & $O$ & \multirow{2}{*}{
      \parbox{16cm}{
        $I = \{ X, Y, Z \}$\newline
        $O = \{ 1 \}$\newline
        $M = \{ 1, 0, 1 \}$.
      }} \\
    \cline{2-5}
    $M$: & \multicolumn{1}{|c|}{1} & 0 & 1 & 1 \\
    \cline{2-5}
    \multicolumn{5}{c}{}
  \end{tabular}
\end{center}
The classical DNF equation by Algorithm \ref{algorithm:WED1} is:
\begin{center}
(X AND NOT(Y) AND Z)
\end{center}
and the augmented equation by Algorithm \ref{algorithm:WED2} is:
\begin{center}
(XNOR(X,1) AND XNOR(Y,0) AND XNOR(Z,1))
\end{center}
\label{solution:SUN1}
\end{solution}

\begin{example}
Produce a logic table $T_{2}$ which ``computes'' the sum of three bits.
\llabel{example:SUN2}
\end{example}

\begin{solution}
The logic table produced for Example \ref{example:SUN2} is:
\begin{center}
  \begin{tabular}{ c c|c|c|c|c|c }
    \cline{5-6}
    $I$: & \multicolumn{1}{c}{$X$} & \multicolumn{1}{c}{$Y$} & \multicolumn{1}{c|}{$Z$} & $O_{1}$ & $O_{2}$ \\
    \cline{2-6}
    \multirow{4}{*}{$M$:} & \multicolumn{1}{|c|}{0} & 0 & 1 & 0 & 1 & \multirow{3}{*}{
      \parbox{12cm}{
        $I = \{ X, Y, Z \}$\newline
        $O_{1} = \{ 0, 0, 1, 0, 1, 1, 1 \}$\newline
        $O_{2} = \{ 1, 1, 0, 1, 0, 0, 1 \}$\newline
        $M = \{\newline
        \{ 0, 0, 1 \}, \{ 0, 1, 0 \}, \{ 0, 1, 1 \}, \{ 1, 0, 0 \},\newline
        \{ 1, 0, 1 \}, \{ 1, 1, 0 \}, \{ 1, 1, 1 \}\newline
        \}$.
      }} \\
    \cline{2-6}
    & \multicolumn{1}{|c|}{0} & 1 & 0 & 0 & 1\\
    \cline{2-6}
    & \multicolumn{1}{|c|}{0} & 1 & 1 & 1 & 0\\
    \cline{2-6}
    & \multicolumn{1}{|c|}{1} & 0 & 0 & 0 & 1\\
    \cline{2-6}
    & \multicolumn{1}{|c|}{1} & 0 & 1 & 1 & 0\\
    \cline{2-6}
    & \multicolumn{1}{|c|}{1} & 1 & 0 & 1 & 0\\
    \cline{2-6}
    & \multicolumn{1}{|c|}{1} & 1 & 1 & 1 & 1\\
    \cline{2-6}
  \end{tabular}
\end{center}
The outputs $O_{1}$ and $O_{2}$ produce separate classical DNF equations by Algorithm \ref{algorithm:WED1} as:
\begin{center}
$O_{1} = $ (NOT(X) AND Y AND Z) OR (X AND NOT(Y) AND Z) OR (X AND Y AND NOT(Z)) OR (X AND Y AND Z))
\end{center}
\begin{center}
$O_{2} =$ (NOT(X) AND NOT(Y) AND Z) OR (NOT(X) AND Y AND NOT(Z)) OR (X AND NOT(Y) AND NOT(Z)) OR (X AND Y AND Z)
\end{center}
with augmented formulas given by Algorithm \ref{algorithm:WED2} as:
\begin{center}
$O_{1} =$ (XNOR(X,0) AND XNOR(Y,1) AND XNOR(Z,1)) OR (XNOR(X,1) AND XNOR(Y,0) AND XNOR(Z,1)) OR (XNOR(X,1) AND XNOR(Y,1) AND XNOR(Z,0)) OR (XNOR(X,1) AND XNOR(Y,1) AND XNOR(Z,1))
\end{center}
\begin{center}
$O_{2} =$ (XNOR(X,0) AND XNOR(Y,0) AND XNOR(Z,1)) OR (XNOR(X,0) AND XNOR(Y,1) AND XNOR(Z,0)) OR (XNOR(X,1) AND XNOR(Y,0) AND XNOR(Z,0)) OR (XNOR(X,1) AND XNOR(Y,1) AND XNOR(Z,1))
\end{center}
\label{solution:SUN1}
\end{solution}

\subsection{Extensions to continuous logic}

We find continuous mathematical definitions for classical Boolean operators remaining relatively unchanged from history,
with the definition of NOT, AND, and OR going as far back as \cite{CitationOfGeorgeBoole} and \cite{CitationOfCharlesPeirce}.

When selecting continuous mathematical functions that compute NOT, AND, OR, and XNOR, we may then seek any of those functions that produce the same \{0,1\} output values for any given \{0,1\} input values, and consider alternatives\footnote{We have found $X \oplus Y = min(|x+y|,1)$ to behave more accurately than $x+y-x\*y$.} to the definitions here as affecting the values produced between 0 and 1. We analyze and observe the results. In particular, we examine the following:

\begin{definition}

\parbox{5cm}{
  \begin{tabular}{ c|c|c }
    \cline{2-2}
    \multicolumn{1}{c|}{$X$} & $NOT$ \\
    \cline{1-2}
    \multicolumn{1}{|c|}{0} & 1 \\
    \cline{1-2}
    \multicolumn{1}{|c|}{1} & 0 \\
    \cline{1-2}
  \end{tabular}
  $= (1 - X)$
}
\parbox{6cm}{
  The {\em complement} of a logical quantity produces a depiction of its lack of contribution to a whole truth.
}
\end{definition}

\begin{definition}

\parbox{5cm}{
  \begin{tabular}{ c|c|c|c }
    \cline{3-3}
    \multicolumn{1}{c}{$X$} & \multicolumn{1}{c|}{$Y$} & $AND$ \\
    \cline{1-3}
    \multicolumn{1}{|c|}{0} & 0 & 0 \\
    \cline{1-3}
    \multicolumn{1}{|c|}{0} & 1 & 0 \\
    \cline{1-3}
    \multicolumn{1}{|c|}{1} & 0 & 0 \\
    \cline{1-3}
    \multicolumn{1}{|c|}{1} & 1 & 1 \\
    \cline{1-3}
  \end{tabular}
  $= X * Y$
}
\parbox{6cm}{
  The {\em multiplication} of two logical quantities produces a depiction of their mutual contribution to a whole truth.
}
\end{definition}

\begin{definition}

\parbox{5cm}{
  \begin{tabular}{ c|c|c|c }
    \cline{3-3}
    \multicolumn{1}{c}{$X$} & \multicolumn{1}{c|}{$Y$} & $OR$ \\
    \cline{1-3}
    \multicolumn{1}{|c|}{0} & 0 & 0 \\
    \cline{1-3}
    \multicolumn{1}{|c|}{0} & 1 & 1 \\
    \cline{1-3}
    \multicolumn{1}{|c|}{1} & 0 & 1 \\
    \cline{1-3}
    \multicolumn{1}{|c|}{1} & 1 & 1 \\
    \cline{1-3}
    \multicolumn{4}{c}{$= X \oplus Y = x+y$ : [0,1] } \\
  \end{tabular}
  }
  \parbox{6cm}{
    The {\em capped addition} of two logical quantities produces a depiction of an independent contribution of each quantity toward a whole truth.
  }
  \label{definition:BOOLEAN_OR}
\end{definition}

\begin{definition}

\parbox{5cm}{
  \begin{tabular}{ c|c|c|c }
    \cline{3-3}
    \multicolumn{1}{c}{$X$} & \multicolumn{1}{c|}{$Y$} & $XNOR$ \\
    \cline{1-3}
    \multicolumn{1}{|c|}{0} & 0 & 1 \\
    \cline{1-3}
    \multicolumn{1}{|c|}{0} & 1 & 0 \\
    \cline{1-3}
    \multicolumn{1}{|c|}{1} & 0 & 0 \\
    \cline{1-3}
    \multicolumn{1}{|c|}{1} & 1 & 1 \\
    \cline{1-3}
    \multicolumn{4}{c}{$= 1-|X - Y| = EQ(X,Y)$, or} \\
  \end{tabular}
  }
  \parbox{6cm}{
    The {\em XNOR} of two logical quantities produces a depiction of the extent as to which two logical quantities agree toward the value of a truth.
  }
  \\
  \hspace*{3cm}$(1-|X - Y|)^{i}$ \\
  \hspace*{3cm}$= EQ(X, Y, i)$ \\
  \label{definition:XNOR}
\end{definition}

Where the operator $\oplus$ denoted here is {\em addition} restricted to the range [0,1], the XNOR function is labeled $EQ$ for clarity\footnote{Observe that the Boolean XNOR operation can indeed be described as an {\em equals} operator}, $i$ provides inference control and is the subject of a further section of the paper, and the operator $|x|$ denotes the {\em absolute value} operator.

We can then rewrite Solution \ref{solution:WED2} to produce the equation shown in Solution \ref{solution:WED3} which we call the {\em continuous form} of its logical expression.

\begin{solution}
The equation from Solution \ref{solution:WED2} rewritten in ``continuous form'' is:
\begin{center}
  $(EQ(X,0) * EQ(Y,1)) \oplus (EQ(X,1) * EQ(Y,0))$
\end{center}
\label{solution:WED3}
\end{solution}

When graphed the equation from Solution \ref{solution:WED3} produces the surface depicted in Figure \ref{figure:WED1} and matches the expected output of its definition from Example \ref{example:WED1}, which is the XOR function.

\begin{figure}
\centering
    \includegraphics[totalheight=8cm]{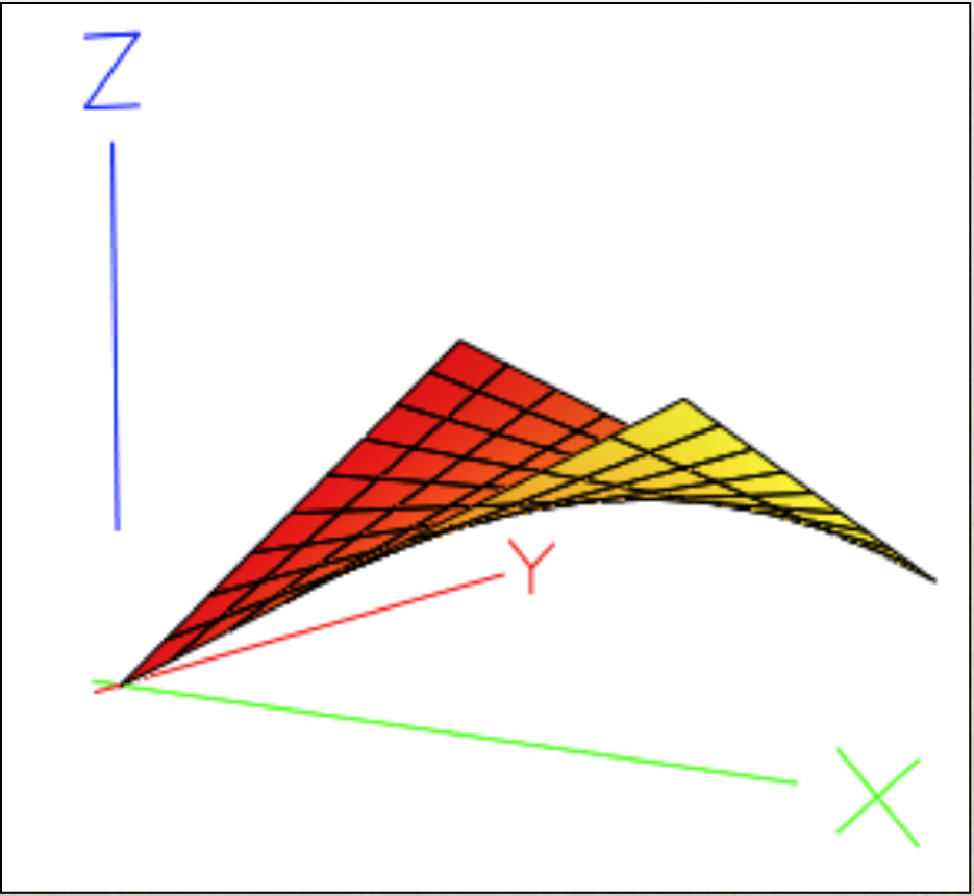}
    \caption{Graphed output of the equation from Solution \ref{solution:WED3}}
    \label{figure:WED1}
\end{figure}

We next apply a trivial arithmetic operation to cause each term of the equations emitted by Algorithm \ref{algorithm:WED2} to support arbitrary row outputs which can be configured to be in the range [0,1]. We accomplish that by applying a multiplier to each term, thereby transforming whether the term matches its configured values, into what we want the term to output given that it matches its configured values, producing the listing in Algorithm \ref{algorithm:WED3}.

\subsection{Provision for continuous outputs}

\begin{theorem} On each term of the equation in Solution \ref{solution:WED3}, where we have a value on the range [0,1], we may apply a multiplier Z, producing terms of the form Z*(EQ(X,Y)*...). The resulting transformation is from a logical value on the range [0,1] to a value that becomes Z to the extent that X==Y, or \{ Z,Z,... \} to the extent that \{ X,X,... \} == \{ Y,Y,... \}.
\label{theorem:WED2}
\end{theorem}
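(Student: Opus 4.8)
The plan is to unwind the definitions and verify the three assertions contained in the statement --- that the transformed term stays in $[0,1]$, that it equals $Z$ in the matching case, and that it degrades toward $0$ as the inputs disagree --- and then to give a precise reading of the informal phrase ``to the extent that''.

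First I would recall from Definition \ref{definition:XNOR} that $EQ(X,Y) = 1 - |X - Y|$, so that whenever $X,Y \in [0,1]$ we have $|X - Y| \in [0,1]$ and hence $EQ(X,Y) \in [0,1]$. A generic term of the continuous-form equation in Solution \ref{solution:WED3}, as emitted by Algorithm \ref{algorithm:WED2}, has the shape $\prod_{k} EQ(X_k, c_k)$, where each $c_k$ is the configured matrix value $M_{i,j}$ for that row. Since $[0,1]$ is closed under multiplication, this product lies in $[0,1]$, and prepending the multiplier $Z \in [0,1]$ gives $Z * \prod_{k} EQ(X_k, c_k) \in [0,1]$. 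This establishes the range claim.

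Next I would check the boundary behaviour. If $X_k = c_k$ for every $k$, then each factor is $EQ(X_k, c_k) = 1 - 0 = 1$, so the product equals $1$ and the whole term equals $Z$; this is precisely the assertion that the term ``becomes $Z$ to the extent that $X == Y$'', read at the exact-match point, and more generally the $\{Z,Z,\ldots\}$ phrasing is just the statement that this holds coordinatewise over all factors. Conversely, in the Boolean setting a mismatch in coordinate $k$ means $|X_k - c_k| = 1$, whence that factor is $0$ and the term is $0$ regardless of $Z$; by Theorem \ref{theorem:THU1} this recovers exactly the behaviour of the original NOT/AND DNF term, now scaled by the intended row output.

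Finally, to make sense of ``to the extent that'' in the genuinely continuous range, I would observe that for fixed $Z$ the term $Z\prod_{k}(1 - |X_k - c_k|)$ is a nonincreasing function of each discrepancy $|X_k - c_k|$, interpolating continuously between its maximum value $Z$ (all discrepancies zero) and $0$ (any single discrepancy equal to $1$). The hard part here is not any computation --- each of the above is a one-line check --- but rather committing to a precise interpretation of the informal phrase; I would resolve this by simply adopting the product $\prod_{k}(1 - |X_k - c_k|)$ as the formal meaning of ``extent of agreement'', which is consistent with the role of $EQ$ elsewhere in the paper and which collapses to the exact Boolean semantics on $\{0,1\}$ inputs.
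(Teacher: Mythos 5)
Your proposal is correct and follows essentially the same route as the paper, whose entire proof is the single sentence ``If EQ(X,Y) is whether X==Y by Definition \ref{definition:XNOR}, then Z*EQ(X,Y) is Z to the extent that X==Y.'' You simply carry out in detail the verification the paper leaves implicit --- the closure of $[0,1]$ under the product, the exact-match boundary case, and a monotone reading of ``to the extent that'' --- so your write-up is a strictly more rigorous rendering of the same one-line argument.
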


\begin{proof}
If EQ(X,Y) is whether X==Y by Definition \ref{definition:XNOR}, then Z*EQ(X,Y) is Z to the extent that X==Y.
\end{proof}

\begin{example}
\label{example:CONTINUOUS_FORM}
\end{example}

We apply a multiplier to each term and observe that the output interpolates to express the values depicted in the multipliers.
\begin{enumerate}
 \item $O_{0}*(EQ(X,0) * EQ(Y,1)) \oplus O_{1}*(EQ(X,1) * EQ(Y,0))$
\end{enumerate}

We reproduce the formula as follows, and observe that the nature of the formula is multidimensional interpolation.

Given the set $I = \{ i_{0}, i_{1}, i_{2}, ..., i_{i} \}$ of arbitrary input values on the range [0,1].

Given the set $M = \{ M_{0,0}, M_{0,1}, ..., M_{i,j} \}$ of arbitrary matrix values on the range [0,1].

Given the set $O = \{ O_{0}, O_{1}, ..., O_{j} \}$ of arbitrary output values on the range [0,1].

We can form a table as follows:

\begin{tabular}{ c|c|c|c|c| p{12cm} }
  \multicolumn{5}{c}{ $I_{i} = \{ i_{0}, i_{1}, i_{2}, ..., i_{i} \}$ } & \\
  \cline{5-5}
  \multicolumn{4}{c|}{$M_{i,j}$} & $O_{j}$ & \multirow{3}{*}{
    \parbox{12cm}{Where the input values $I_{i}$ are applied upon the matrix values $M_{i,j}$ via the process depicted in Algorithm \ref{algorithm:WED3}, to produce an interpolation of the output as configured in the set $O$.}} \\
  \cline{1-5}
  \multicolumn{1}{|c|}{$m_{0,0}$} & $m_{1,0}$ & ... & $m_{i,0}$ & $o_{0}$ & \\
  \cline{1-5}
  \multicolumn{1}{|c|}{$m_{0,1}$} & $m_{1,1}$ & ... & $m_{i,0}$ & $o_{1}$ & \\
  \cline{1-5}
  \multicolumn{1}{|c|}{$m_{0,2}$} & $m_{1,2}$ & ... & $m_{i,0}$ & $o_{2}$ & \\
  \cline{1-5}
  \multicolumn{1}{|c|}{$...$} & $...$ & $...$ & $...$ & ... & \\
  \cline{1-5}
  \multicolumn{1}{|c|}{$m_{0,j}$} & $m_{1,j}$ & ... & $m_{i,j}$ & $o_{j}$ & \\
  \cline{1-5}
\end{tabular}

Producing an equation of the form:
\begin{center}
  $E = o_{0}*(EQ(i_{0},m_{0,0}) * EQ(i_{1},m_{1,0}) * ... * EQ(i_{i},m_{i,0})) \oplus ... \oplus o_{j}*(EQ(i_{0},m_{0,j}) * EQ(i_{1},m_{1,j}) * ... * EQ(i_{i},m_{i,j}))$
\end{center}

Where Algorithm \ref{algorithm:WED3} extends the prior algorithms to account for the rescaling of the terms of the DNF equation, and to produce the full formula as shown in Example \ref{example:CONTINUOUS_FORM}.

\begin{algorithm}
\caption{Production of a ``Continuous Form'' Equation from Logic Table using XNOR (labeled ``EQ'')}
\begin{algorithmic}[1]
\STATE Given a logic table \{ $I$, $O$, $M$ \}
\STATE Let $equation$ = 0.0
\FOR{each row $j$ in $M$}
    \IF { $O_{j} \neq 0.0$ }
      \STATE Let $term$ = $O_{j}$
      \FOR{each input $i$ \ELEMENTOF $I$ }
        \STATE $term$ = AND($term$, EQ($i_{i}$, $m_{i,j}$))
      \ENDFOR
      \STATE $equation$ = OR($equation$, $term$)
    \ENDIF
\ENDFOR
\RETURN $equation$
\end{algorithmic}
\label{algorithm:WED3}
\end{algorithm}

\begin{theorem}
\llabel{theorem:WED.1.6.5.21}
Algorithm \ref{algorithm:WED3} produces a continuous equation which performs multidimensional interpolation.\end{theorem}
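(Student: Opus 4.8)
The plan is to unpack the phrase ``performs multidimensional interpolation'' into three verifiable claims about the function $E$ defined by the equation emitted by Algorithm~\ref{algorithm:WED3}, and to prove each in turn: (i) $E$ is a continuous function of the input tuple $I$ on $[0,1]^{|I|}$; (ii) $E$ returns exactly the configured output $o_j$ whenever the input equals the $j$-th row of $M$ (the \emph{interpolation nodes}), at least when the rows of $M$ are drawn from the Boolean cube $\{0,1\}^{|I|}$; and (iii) for such Boolean tables, $E$ coincides on all of $[0,1]^{|I|}$ with the classical multilinear interpolant of the data $\{(m_{\cdot,j},\,o_j)\}_j$. Claims (i)--(iii) are exactly what the assertion demands, so establishing them proves the theorem.

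For (i): by Definition~\ref{definition:XNOR} each factor $EQ(i_k,m_{k,j}) = 1-|i_k-m_{k,j}|$ is continuous; since the AND operation is (associative) multiplication, the loop in Algorithm~\ref{algorithm:WED3} accumulates the $j$-th term to $T_j = o_j\prod_k EQ(i_k,m_{k,j})$, a constant multiple of a finite product of continuous functions, hence continuous; and since the OR operation is the capped sum $\min(x+y,1)$, which is continuous, $E = T_{j_1}\oplus\cdots\oplus T_{j_r}$ is a finite composition of continuous maps and therefore continuous. For (ii): fix a node, i.e.\ a row $j_0$, and set $i_k = m_{k,j_0}$ for every $k$. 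Then $EQ(m_{k,j_0},m_{k,j_0}) = 1-0 = 1$, so $T_{j_0} = o_{j_0}$ --- this is Theorem~\ref{theorem:WED2} applied to that term. For any other row $j\neq j_0$ with entries in $\{0,1\}$, the two rows differ in some coordinate $k$, so $|i_k-m_{k,j}| = |m_{k,j_0}-m_{k,j}| = 1$ and the factor $EQ(i_k,m_{k,j}) = 0$ kills $T_j$; hence $E = o_{j_0}\oplus 0\oplus\cdots\oplus 0 = \min(o_{j_0},1) = o_{j_0}$, using $o_{j_0}\in[0,1]$.

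For (iii): when $m_{k,j}\in\{0,1\}$, Definition~\ref{definition:XNOR} (the continuous analogue of Theorems~\ref{theorem:WED1} and~\ref{theorem:THU1}) gives $EQ(i_k,0) = 1-i_k$ and $EQ(i_k,1) = i_k$, so the weight $w_j := \prod_k EQ(i_k,m_{k,j}) = \prod_k i_k^{\,m_{k,j}}(1-i_k)^{\,1-m_{k,j}}$ is precisely the multilinear basis ``hat'' function attached to the cube vertex $m_{\cdot,j}$. The crux is the partition-of-unity identity
\[
  \sum_{v\in\{0,1\}^{|I|}}\;\prod_k i_k^{\,v_k}(1-i_k)^{\,1-v_k}
  \;=\; \prod_k\bigl(i_k + (1-i_k)\bigr) \;=\; 1,
\]
obtained by expanding the right-hand product. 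Since rows with $O_j = 0$ are skipped by Algorithm~\ref{algorithm:WED3} (and would contribute $0$ anyway), $\sum_j o_j w_j \le (\max_j o_j)\sum_j w_j \le \max_j o_j \le 1$, so every cap in the chain of $\oplus$'s is inactive and $E = \sum_j o_j w_j$ \emph{exactly}. That expression is the standard formula for multidimensional (bilinear, trilinear, $\ldots$) interpolation over the unit cube, and it visibly reproduces (ii) because at a vertex one weight is $1$ and all others vanish. This establishes all three claims.

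The main obstacle is the nonlinearity of $\oplus$: until one knows the cap is never reached, $E$ is not the affine combination $\sum_j o_j w_j$ and the whole interpolation picture collapses, so the argument hinges on the partition-of-unity identity above together with the hypothesis $o_j\in[0,1]$. A secondary subtlety is what to claim when the matrix entries $m_{k,j}$ are arbitrary points of $[0,1]$ rather than cube vertices: then (i) still holds, but (ii) can fail (two nearly identical rows no longer annihilate one another) and $E$ is only a continuous ``blend'' of the $o_j$'s rather than a genuine interpolant. I would resolve this either by restricting the theorem to Boolean tables $M\subseteq\{0,1\}^{|I|}$, or by stating the conclusion as ``$E$ is continuous and reduces to multilinear interpolation on Boolean logic tables.''
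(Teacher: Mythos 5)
Your proof is correct, but it takes a genuinely different and considerably more rigorous route than the paper's. The paper argues by an informal induction on the shape of the table: it writes out the emitted equation for one variable and one row, then two variables and one row, then two rows joined by $\oplus$, and at each stage simply \emph{observes} that the output equals $O_j$ when the input matches row $j$ and ``approaches'' $O_j$ as the input approaches that row. It never defines what interpolation means, never checks that the terms from \emph{other} rows vanish at a node, and never confronts the nonlinearity of the capped sum $\oplus$. You instead unpack the claim into three precise assertions (continuity, exactness at the nodes, identity with the multilinear interpolant on Boolean tables), and your key lemma --- the partition-of-unity identity $\sum_{v}\prod_k i_k^{v_k}(1-i_k)^{1-v_k}=1$, which forces every partial sum in the $\oplus$-chain below the cap so that $E=\sum_j o_j w_j$ exactly --- is precisely the step the paper's argument is missing: without it one cannot conclude that the equation is an affine blend of the $o_j$ rather than a saturated one. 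You also correctly isolate the caveat the paper glosses over: when the matrix entries $m_{k,j}$ are arbitrary points of $[0,1]$ rather than cube vertices, node exactness fails (a term from a nearby row need not vanish), so the theorem as stated is only literally true for Boolean tables, and for general tables one gets only the weaker ``continuous blend'' behavior that the paper's proof describes qualitatively. In short, the paper's proof buys a quick illustration of the mechanism; yours buys an actual theorem, at the cost of an added hypothesis ($M\subseteq\{0,1\}^{|I|}$, distinct rows, $o_j\in[0,1]$) that the paper never makes explicit.
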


\begin{proof}
  We seek to demonstrate multidimensional interpolation in a proof by induction. We start by generating and expanding a logic table of a single variable and output as follows...
  \begin{center}
  $E = O_{0} * (XNOR(X, M_{0,0})) = O_{0} * ((1.0 - |X - M_{0,0}|))$
  \end{center}
  ...observing that the output of the term will match the value configured in $O_{0}$ when the value of the input $X$ matches the value of the table value $M_{0,0}$. We observe also that as $X$ deviates from $M_{0,0}$, that less of the output value $O_{0}$ contributes to the output of the equation.
  We add an additional variable to the equation, still with only a single row, and we produce the following...
  \begin{center}
  $E = O_{0} * (XNOR(X, M_{0,0}) * XNOR(Y, M_{1,0})) = O_{0} * ((1.0 - |X - M_{0,0}|)*(1.0 - |Y - M_{1,0}|))$
  \end{center}
  ...observing that the quantity $(X,Y)$ must match $(M_{0,0}, M_{1,0})$ for the equation to produce the value $O_{0}$ on its output. It follows that for each additional variable we input into the equation that an additional quantity must be matched in the currently single row of the table in order for the configured output to be emitted.
  We next add an additional row to the logic table, producing an OR condition on the recognition of that row...
  \begin{center}
  $E = O_{0} * (XNOR(X, M_{0,0}) * XNOR(Y, M_{1,0})) \oplus O_{1} * (XNOR(X, M_{0,1}) * XNOR(Y, M_{1,1})) = O_{0} * ((1.0 - |X - M_{0,0}|)*(1.0 - |Y - M_{1,0}|)) \oplus O_{1} * (1.0 - |X - M_{0,1}|) * (1.0 - |Y - M_{1,1}|))$
  \end{center}
  ...and we observe that the output of the equation approaches $O_{0}$ to the extent that $(X,Y)$ approaches $(M_{0},M_{1})$, while also approaching the value of $O_{1}$ to the extent that the quantity $(X,Y)$ approaches $(M_{0,1},M_{1,1})$, resulting in an interpolation between the values depicted in the output array $O_{j}$.
  \label{proof:WED.1.6.5.21}
\end{proof}

We next provision the formula with support for unknowns, and crisp state values, and finally show examples.

\subsection{Provision for unknowns}

We have observed what we might call {\em Boolean Values}, e.g. \{ x : x \ELEMENTOF \{ 0, 1 \} \}, and {\em Continuous Values}, e.g. \{ z : z \ELEMENTOF [0,1] \}, and we next provision with support for additional data as defined as follows.

In particular we produce Algorithm \ref{algorithm:WED4} to allow for the special value UNKNOWN, enabling the machine to be configured so as to not necessitate a full connection of all its inputs to outputs.

\begin{algorithm}
\caption{Production of a ``Continuous Form'' Equation from Logic Table using EQ w/ Provision for Unknowns}
\begin{algorithmic}[1]
\STATE Given a logic table \{ $I$, $O$, $M$ \}
\STATE Let $equation$ = 0.0
\FOR{each row $j$ in $M$}
    \IF { $O_{j} \neq 0.0$ }
      \STATE Let $term$ = $O_{j}$
      \FOR{each input $i$ \ELEMENTOF $I$ }
        \IF{$m_{i,j}$ is not $UNKNOWN$}
          \STATE $term$ = AND($term$, EQ($i_{i}$, $m_{i,j}$))
        \ENDIF
      \ENDFOR
      \STATE $equation$ = OR($equation$, $term$)
    \ENDIF
\ENDFOR
\RETURN $equation$
\end{algorithmic}
\label{algorithm:WED4}
\end{algorithm}

\subsection{Provision for state machines}

We provision the EQ function in Algorithm \ref{algorithm:WED5} with support for crisp {\em State Values}, which we can describe simply as those whole numbered values, e.g. $\{ n : n$ \ELEMENTOF $\nat \}$. We then can configure the logic table to recognize state values and also emit them.

\begin{algorithm}
\caption{EQ w/ support for ``state values''}
\begin{algorithmic}[1]
  \STATE Given $X$, $Y$
  \IF{X and Y are state values}
    \RETURN $(X == Y)$
  \ELSE
    \RETURN $1.0 - |X - Y|$
  \ENDIF
\end{algorithmic}
\label{algorithm:WED5}
\end{algorithm}

\subsection{Provision for inference control}

The proposed formula for the XNOR operation achieves a continuous logic table that performs inference on the logic values produced by the formula of the table. For example, if a logic table is configured to emit a value of 1.0 when a value of 0.5 is presented on its input, the inference performed by the continuous XNOR operator will cause values surrounding 0.5 such as 0.4 or 0.6 to produce values near 1.0 such as 0.9.

The level of inference can be controlled by using a modifier supplied in the extended description of the XNOR operator in Definition \ref{definition:XNOR}.

Accordingly the EQ function is augmented in Algorithm \ref{algorithm:TUE.9.55.NOV.5.2019} to support inference control modifiers.

\begin{algorithm}
\caption{EQ w/ support for ``inference control''}
\begin{algorithmic}[1]
  \STATE Given $X$, $Y$, $i$
  \IF{X and Y are state values}
    \RETURN $(X == Y)$
  \ELSE
    \RETURN $(1.0 - |X - Y|)^{i}$
  \ENDIF
\end{algorithmic}
\label{algorithm:TUE.9.55.NOV.5.2019}
\end{algorithm}

\subsection{Examples}

\begin{problem}
Show the construction of a machine which plays a primitive game of {\em Soccer}, while being programmed not in the traditional sense, but programmed {\em probabilistically} with equations of the form depicted prior.
\label{problem:TUE.2.16.1}
\end{problem}

We approach the answer to Problem \ref{problem:TUE.2.16.1} in Solution \ref{solution:MON.1.11} by constructing a series of sets of arbitrary information that will be required to produce the sets needed for the logic tables. In particular we are depicting a {\em Fuzzy Control System} process as in \cite{CitationOfFuzzyControlSystems}.

We will let $W = \{ w_{1}, w_{2}, w_{3}, ..., w_{w} \}$ be the set of arbitrary ``real world'' values that the character may perceive.
These are quantities which may be considered to be of any form or composition. The set W for this example is then given in Table \ref{table:MON.1.34}.

\begin{table}[ht]
\centering
\caption{World Variables for Solution \ref{solution:MON.1.11}}
\begin{tabular}{ |c|l| }
\cline{1-2}
  World Variable & \multicolumn{1}{c|}{Definition} \\
  \cline{1-2}
  $w_{0}$ & The $P=(x,y)$ position of the robot. \\
  \cline{1-2}
  $w_{1}$ & The $F=<x,y>$ normalized vector in the forward facing direction of the robot. \\
  \cline{1-2}
  $w_{2}$ & The $R=<x,y>$ normalized direction to the robot's right\footnote{In two dimensions this can be inferred from the forward vector by $R_{x}=-F_{y}, R_{y}=F_{x}$.}. \\
  \cline{1-2}
  $w_{3}$ & The $Q=(x,y)$ position of the robot's current target\footnote{e.g. a soccer ball or goal.} \\
  \cline{1-2}
  $w_{4}$ & The $D$ distance to the robot's current target. \\
  \cline{1-2}
  $w_{5}$ & The $B=(x,y)$ position of the soccer ball. \\
  \cline{1-2}
  $w_{6}$ & The $G=(x,y)$ position of the goal. \\
  \cline{1-2}
  $w_{7}$ & An H \ELEMENTOF \{0,1\} indicator of whether the ball is ``held''. \\
  \cline{1-2}
  $w_{8}$ & The $V = (Q-P)/|Q-P|$ normalized vector from the robot to the current target. \\
  \cline{1-2}
  $w_{9}$ & The $F \cdotp V$ dot product of the robot's forward vector with the vector to the target. \\
  \cline{1-2}
  $w_{10}$ & The $R \cdotp V$ dot product of the robot's right vector with the vector to the target. \\
  \cline{1-2}
\end{tabular}
\label{table:MON.1.34}
\end{table}

Our next step is to apply a normalization of the values from W into the character's [0,1] logic space.
This step is called the {\em fuzzification} step after \cite{CitationOfFuzzyControlSystems}.

Then let the set $S = \{ s_{1}, s_{2}, s_{3}, ..., s_{s} \}$ be the arbitrary set of normalized sensor values. These are continuous logic values on the range [0,1] and represent continuous Boolean logic (to distinguish it from discrete Boolean logic). We infer this set of values from the set W as shown in Table \ref{table:MON.1.38}.

\begin{table}[ht]
\centering
\caption{Sensor Variables for Solution \ref{solution:MON.1.11}}
\begin{tabular}{ |c|l|l| }
\cline{1-3}
  Sensor Variable & \multicolumn{1}{c}{Definition} & \multicolumn{1}{|c|}{Calculation} \\
  \cline{1-3}
  $s_{0}$ & Is the target to my front? & clamp($w_{9}, 0, 1$) \\
  \cline{1-3}
  $s_{1}$ & Is the target to my back? & clamp($-1*w_{9}, 0, 1$) \\
  \cline{1-3}
  $s_{2}$ & Is the target near? & map($w_{4}, 0, 400, 1, 0$) \\
  \cline{1-3}
  $s_{3}$ & Is the target to my right? & clamp($w_{10}, 0, 1$) \\
  \cline{1-3}
  $s_{4}$ & Is the target to my left? & clamp($-1*w_{10}, 0, 1$) \\
  \cline{1-3}
  $s_{5}$ & Do I have the ball? & $w_{7}$ \\
  \cline{1-3}
\end{tabular}
\label{table:MON.1.38}
\end{table}

\begin{table}[ht]
\centering
\caption{Miscellaneous functions}
\begin{tabular}{ |c|l| }
\cline{1-2}
  Function & \multicolumn{1}{c|}{Definition} \\
  \cline{1-2}
  clamp($x$, $min$, $max$) & if $x<min$ return $min$ else if $x>max$ return $max$ else return $x$. \\
  \cline{1-2}
  map($x$, $min1$, $max1$, $min2$, $max2$) & return $min2 + ((x - min1) / (max1 - min1)) * (max2 - min2)$. \\
  \cline{1-2}
\end{tabular}
\label{table:MON.6.50}
\end{table}

Where the miscellaneous mathematical functions such as ``clamp'' and ``map'' are listed in Table \ref{table:MON.6.50}, and are being used to perform the conversion $W \sr S$, and are called {\em fuzzification functions}. We observe that there may exist many such functions.

We may then define a series of continuous logic tables to describe each possible output behavior of the character as given in Solution \ref{solution:MON.1.11}.

\begin{solution} Logic tables for Problem \ref{problem:TUE.2.16.1}.
\label{solution:MON.1.11}

\begin{tabular}{ c|c|c }
  \multicolumn{2}{c}{ Should I drive forward? } & \\
  \multicolumn{2}{c}{ $I_{i} = \{ s_{0} \}$ } & \\
  \cline{2-2}
  \multicolumn{1}{c|}{$M_{i,j}$} & $O_{j}$ & \multirow{3}{*}{
    \parbox{12cm}{= 1.0 * EQ($s_{0}$, 1.0)}} \\
  \cline{1-2}
  \multicolumn{1}{|c|}{$1.0$} & $1.0$ & \\
  \cline{1-2}
\end{tabular}\begin{center}\end{center}

\begin{tabular}{ c|c|c|c|c }
  \multicolumn{4}{c}{ Should I throw the ball? } & \\
  \multicolumn{4}{c}{ $I_{i} = \{ s_{0}, s_{2}, s_{5} \}$ } & \\
  \cline{4-4}
  \multicolumn{3}{c|}{$M_{i,j}$} & $O_{j}$ & \multirow{3}{*}{
    \parbox{12cm}{= 1.0 * EQ($s_{0}$, 1.0) * EQ($s_{2}$, 0.75) * EQ($s_{5}$, 1.0)}} \\
  \cline{1-4}
  \multicolumn{1}{|c|}{$1.0$} & $0.75$ & $1.0$ & $1.0$ & \\
  \cline{1-4}
\end{tabular}\begin{center}\end{center}

\begin{tabular}{ c|c|c|c }
  \multicolumn{3}{c}{ Should I turn to the right? } & \\
  \multicolumn{3}{c}{ $I_{i} = \{ s_{1}, s_{3} \}$ } & \\
  \cline{3-3}
  \multicolumn{2}{c|}{$M_{i,j}$} & $O_{j}$ & \multirow{3}{*}{
    \parbox{12cm}{= 1.0 * EQ($s_{3}$, 1.0) $\oplus$ 1.0 * EQ($s_{1}$, 1.0) * EQ($s_{3}$, 1.0) }} \\
  \cline{1-3}
  \multicolumn{1}{|c|}{UNK} & $1.0$ & $1.0$ & \\
  \cline{1-3}
  \multicolumn{1}{|c|}{$1.0$} & $1.0$ & $1.0$ & \\
  \cline{1-3}
\end{tabular}\begin{center}\end{center}

\begin{tabular}{ c|c|c|c }
  \multicolumn{3}{c}{ Should I turn to the left? } & \\
  \multicolumn{3}{c}{ $I_{i} = \{ s_{1}, s_{4} \}$ } & \\
  \cline{3-3}
  \multicolumn{2}{c|}{$M_{i,j}$} & $O_{j}$ & \multirow{3}{*}{
    \parbox{12cm}{= 1.0 * EQ($s_{4}$, 1.0) $\oplus$ 1.0 * EQ($s_{1}$, 1.0) * EQ($s_{4}$, 1.0) }} \\
  \cline{1-3}
  \multicolumn{1}{|c|}{UNK} & $1.0$ & $1.0$ & \\
  \cline{1-3}
  \multicolumn{1}{|c|}{$1.0$} & $1.0$ & $1.0$ & \\
  \cline{1-3}
\end{tabular}\begin{center}\end{center}

\begin{tabular}{ c|c|c }
  \multicolumn{2}{c}{ Where should I target (X,Y)? } & \\
  \multicolumn{2}{c}{ $I_{i} = \{ s_{5} \}$ } & \\
  \cline{2-2}
  \multicolumn{1}{c|}{$M_{i,j}$} & $O_{j}$ & \multirow{3}{*}{
    \parbox{12cm}{= $w_{6}$ * EQ($s_{5}$, 1.0) $\oplus$ $w_{5}$ * EQ($s_{5}$, 0.0) }} \\
  \cline{1-2}
  \multicolumn{1}{|c|}{1.0} & $w_{6}$ & \\
  \cline{1-2}
  \multicolumn{1}{|c|}{0.0} & $w_{5}$ & \\
  \cline{1-2}
\end{tabular}
\end{solution}

We then, at periodic intervals update the sets W and S, and compute a set Z to contain the answer to the questions depicted in the logic tables, which we may then directly ``defuzzify'' into motive actions. In particular, we may multiply the logic value of whether to drive forward with a value representing how fast to travel at maximum, and apply it to the character's position causing the character to travel forward. We may also defuzzify the logical quantities about turning right or left by multiplying each of them by a quantity depicting how much to turn at maximum, and then turning in whichever direction represents a greater recommendation value, and we may defuzzify the logical quantity about throwing the ball, by evaluating whether it exceeds a threshold value (such as 0.90). We pick up the ball when the character is sufficiently close to it, setting the appropriate world variable, $w_{7}$, to 1, and release the ball when the character decides to throw it by setting the variable to 0, and we apply motion to the ball depicting it thrown. We can then reset the position of the ball, or indeed, keep it at its arbitrary place on the field, and we observe that the character plays a primitive game of soccer, picking up the ball, carrying it to, and throwing it in the goal, and will continue doing so perpetually as long as the character runs.

\subsection{Application to arbitrary range, precision, and dimensionality}

The logic tables described thus far in this paper embody logic on the range [0,1], but can be utilized in the following fashions to operate on the range [-1,1], including the arbitrary range [a,b], and also multidimensionally, e.g. logic values (x,y), (x,y,z), etc. i.e. ([a,b],[c,d]), ([a,b],[c,d],[e,f]), and etc. thereby allowing the system of logic to contemplate on notions of arbitrary data and precision, and also to think in terms of color and spatial dimensions.

First, to provide an input to the logic system on the arbitrary range [0,b], the input value is downscaled into the internal range [0,1] whilst being input into the system by performing a division of its ranged value $b$, and taking its absolute value. In that way the arbitrary range [0,b] is supported while internally the logic system operates on the range [0,1]. Given that, an input value $X$ of range [-1,1] can be constructed out of two separate internal input values $X_{a}$ [0,1], and $X_{b}$ [0,1], such that when $X$ is between -1 and 0 the absolute of its value is supplied to the $X_{a}$ input, and when $X$ is between 0 and 1 it is supplied to the $X_{b}$ input thereby allowing the logic system to internally operate on the range [0,1] while supporting an external impression that it is operating on the range [-1,1]. The arbitrary range [a,b] can then be created, trivially if [a,b] are both of the same sign by shifting and scaling the numeric system onto the range [0, 1] through a trivial addition and multiplication, or by treatment of two separate inputs on the range [0,+a] and [0,+b] respectively, performing a scaling and absoluting of the input values into the appropriate [0,1] range prior to inputing into the logic table. Given that [a, b] are different signs, when a particular input is on the range [0, a] it is transformed and applied to the [0, +a] input, while the [0, +b] input remains zero, and vice-versa. Multidimensional inputs are then possible by adding even further inputs to accomodate the separate dimensions of the input, allowing the production of logic questions which operate on input data that are conceptually of arbitrary data quantities, and that are conceptually multidimensional, e.g. spatial values (x,y,z) and etc. and color values (r,g,b), thereby allowing production of contemplations in terms of arbitrary data quantities, colors, and spatial constructs.

Second, providing output on an arbitrary range [0,b] can be accomplished by scaling the [0,1] output of the logic system by multiplying its output by its range value $b$. In that way the arbitrary range [0,b] is supported while internally the logic system operates on the range [0,1]. Given that, an output value $X$ of range [-1,1] can be constructed out of two separate logic tables $X_{a}$ [0,1], and $X_{b}$ [0,1], such that the logic table $X_{a}$ decides whether the effective output value occupies a range from [-1,0] (the value of the logic table being negated), and the logic table $X_{b}$ decides whether the effective output value occupies a range from [0,1]. Though there are two separate values determined by two separate tables internally, the table with the greater recommendation enjoys its value to be taken as the single conceptual output. An example of this can be observed in the robot soccer example of Problem \ref{problem:TUE.2.16.1} where the robot effectively decides whether to turn left or right, essentially answering the logic question "How should I turn?" on a conceptual range of [-1,1] by resolving both directions separately and taking the greater value of the two in the production of its turn. Via further value scaling the logic question may then be produced on the arbitrary range [a,b], and through provision of additional tables the output value may then be conceptually multidimensional e.g. (x,y) as ([a,b],[c,d]), or (x,y,z) as ([a,b],[c,d],[e,f]), and etc. allowing the logical contemplations to produce answers to logic questions as points of multidimensional space, and thereby answering logic questions in terms of arbitrary data quantities, colors, and spatial constructs.

\subsection{Application to ``hard'' questions}
At this point the reader might wonder why the initial logic questions in Problem \ref{problem:TUE.2.16.1} are answered with the {\em fuzzification} functions ``map'' and ``clamp'' rather than with logic tables, and the answer is that they are deemed {\em atomic questions}, as defined in Definition \ref{definition:TUES.10.22.2019.1}, and so are such readily answerable that they do not require the methods of logic tables described in the paper to produce their answers. Contrarily, the questions answered further in Problem \ref{problem:TUE.2.16.1} are in and of themselves not so clear in how to produce their answers, but are considered ``moderate questions'' in that their being answered is indicated in utilizing a single logic table. Then, there exists a category of logic questions that we may call ``hard questions'' that are so complicated that we cannot immediately imagine them requiring a single logic table, though they potentially could, but we imagine their answers as requiring two or more logic tables, not merely of a composite result, but of a hierarchy of logic tables where each contributes in its own way toward the cumulative answer.

As it happens, there exists an interesting technique employed by the author whereby a hierarchy of logic tables may be constructed so as to produce the answer to such a {\em hard question}. We begin with the hard question unconfigured, which we consider could be any question that is indicated to produce an answer that is deemed a logic value, and we next recursively split the hard question into further questions that the hard question depends upon, and we observe that the dependent questions are of a lower or equal difficulty than the hard question itself (for if they were more difficult, the original question would be deemed itself even more difficult further by cumulative laws of composition), such that they may be atomic questions, moderate questions, or hard questions of their own. We then recursively split each of the remaining questions into further questions, and so on, until we encounter a total set of atomic questions whose answers feed as inputs into the moderate or hard questions which they were split from, whose own answers are then fed into the questions they were split from, and so on, until we have a hierarchy depicting the answer to the original hard question, or else we have encountered either an infinite loop, or an inability to split a question, in which case the original question is not computable, at least at the present time with the available information at hand (something that should never occur in answerable questions).

\subsection{Application to machine learning}

Machine learning can be accomplished on these logic constructs via the dynamic insertion of rows
into the logic table matrices. In particular an interesting form of machine learning exists whereby
a human operates the artificially intelligent agent and the sensor values as they are at the time of the
human's motor inputs are inserted into the appropriate logic table matrices.

For example, and as for the example in this paper, a human might press forward, left, right, or a button indicating whether to throw the ball, and the soccer robot's sensor values pertaining to whether the goal is to the right or left, forward, ball held, and etc. are inserted into the appropriate logic table corresponding to the human's particular input, e.g. the logic tables that indicate whether to drive forward, left, right, throw the ball, and etc.

Additional sensors e.g. the placement of opponents on the field can be pruned from their insertion according to contradictions on input, for example if a character is now right when they were once left they can be recognized as not relevent to the input because their value is so varied across training examples, thereby serving refinement in hope of a reduced most set of sensor inputs to be considered at any given time.

Additionally the sensor inputs may be quantized onto strict increments, e.g. 0.1, 0.2, etc. so that the human may retain the input button down and not overwhelm the matrix with a multitude of sensor values. In this manner the human may thereby passively and simply play a game of robot soccer and observe that the machine has in that fashion ``learned'' to play the game of soccer in the same fashion, including passing the ball, defensive and offensive participation, and indeed every behavior and action committed by the human subject to the sufficient set of sensors available to the intelligent character. This method of learning has been titled ``observation training'' by the author, and familiarly titled ``Monkey See Monkey Do''. The example as just described has been implemented and verified by the author, but it can also be imagined that an airplane with sufficient sensors can in the same fashion learn to land itself by being landed by a human multiple times provided it is sufficiently tuned in regard to which sensors are necessary to pay attention to at any given time. The act of reducing the sensors is a particular problem but has had some success in refining via identifying contradictions on input as described, e.g. the human presses left in a situation where they pressed right previously, or a particular sensor's range of values exceeds a certain range and so is considered to not be meaningful enough. This problem appears to compound as the number of available sensors increases and could be the subject of further analysis. Nevertheless the author believes that the foundation described as such in this paper provides sufficient support for a theory of contemplation.

\subsection{Change History}
v1: 01/01/2016: Original paper. Solid and reliable in its primary description.\\
v2: 10/27/2018: Added clarifying parenthesisation around $(Q - P)/|Q - P|$. \\
v3: 09/26/2019: Application to arbitrary range, precision, and dimensionality. \\
v4: 10/20/2019: Application to machine learning. \\
v5: 10/22/2019: Application to hard questions, miscellaneous edits, and miscellaneous clarification of definitions w.r.t. added sections. \\
v6: 10/30/2019: Type-O "seprate" $\rightarrow$ "seperate". Added comments on inference control modifiers. \\
v7: 11/05/2019: Added reference and erroneous refinement of inference control modifiers. \\
v8: 11/07/2019: Return inference control modifier to practice. Added change history. \\

\subsection{Conclusions}

This paper summarizes a process whereby logical machines may be constructed and offers a proposed theory of contemplation for further growth and analysis.

{\em Acknowledgements:} The views and conclusions contained herein are those of the author and should not be interpreted as necessarily representing the official policies or endorsements, either expressed or implied, of the Art Institute of Portland.


\def\cprime{$'$}

\end{document}